\DeclareMathOperator*{\argmin}{arg\,min}
\DeclareMathOperator*{\argmax}{arg\,max}
\renewcommand{\mkbegdispquote}[2]{\itshape}
\newcolumntype{P}[1]{>{\centering\arraybackslash}p{#1}}
\newtheorem{defi}{Definition}
\newtheorem{theo}{Theorem}
\newtheorem{cor}{Corollary}
\newtheorem{proposition}{Proposition}
\newcommand*\ttvar[1]{\texttt{\expandafter\dottvar\detokenize{#1}\relax}}
\newcommand*\dottvar[1]{\ifx\relax#1\else
  \expandafter\ifx\string_#1\string_\allowbreak\else#1\fi
  \expandafter\dottvar\fi}
\title[Revealed Multi-Objective Utility Aggregation]{Revealed Multi-Objective Utility Aggregation in Human Driving}
\author{Atrisha Sarkar}
\affiliation{
  \institution{University of Toronto}
  \city{Toronto}
  \country{Canada}}
\email{atrisha.sarkar@utoronto.ca}
\author{Kate Larson}
\affiliation{
  \institution{University of Waterloo}
  \city{Waterloo}
  \country{Canada}}
\email{kate.larson@uwaterloo.ca}
\author{Krzysztof Czarnecki}
\affiliation{
  \institution{University of Waterloo}
  \city{Waterloo}
  \country{Canada}}
\email{krzysztof.czarnecki@uwaterloo.ca}
\begin{abstract}
A central design problem in game theoretic analysis is the estimation of the players' utilities. In many real-world interactive situations of human decision making, including human driving, the utilities are multi-objective in nature; therefore, estimating the parameters of aggregation, i.e., mapping of multi-objective utilities to a scalar value, becomes an essential part of game construction. However, estimating this parameter from observational data introduces several challenges due to a host of unobservable factors, including the underlying modality of aggregation and the possibly boundedly rational behaviour model that generated the observation. Based on the concept of rationalisability, we develop algorithms for estimating multi-objective aggregation parameters for two common aggregation methods, weighted and satisficing aggregation, and for both strategic and non-strategic reasoning models. Based on three different datasets, we provide insights into how human drivers aggregate the utilities of safety and progress, as well as the situational dependence of the aggregation process. Additionally, we show that irrespective of the specific solution concept used for solving the games, a data-driven estimation of utility aggregation significantly improves the predictive accuracy of behaviour models with respect to observed human behaviour.
\end{abstract}
\keywords{Empirical game theory, Bounded rationality, Multi-objective utility}
\newcommand{\BibTeX}{\rm B\kern-.05em{\sc i\kern-.025em b}\kern-.08em\TeX}
\begin{document}

%%% The following commands remove the headers in your paper. For final 
%%% papers, these will be inserted during the pagination process.

\pagestyle{fancy}
\fancyhead{}

%%% The next command prints the information defined in the preamble.

\maketitle 

%%%%%%%%%%%%%%%%%%%%%%%%%%%%%%%%%%%%%%%%%%%%%%%%%%%%%%%%%%%%%%%%%%%%%%%%

\label{chap:pref_est}
\section{Introduction}
In part due to interest in autonomous vehicle (AV) research, recent years have seen a rise in the use of game-theoretic models for modelling human driving behaviour. In this context, the estimation of the utilities of the agents is one of the main steps involved in the design of game-theoretic models based on observational data. However, estimating utilities of players, in this case, human drivers, from purely observational data raises several challenges. First, the problem involves multi-objective utilities; during driving, humans balance different potentially conflicting objectives, such as safety, progress, and comfort, in the process of selecting their desired action. The process of aggregation, defined as the mapping of multi-objective utilities into a scalar value \citep{Roy1971}, is often context dependent and individual specific. Second, the underlying human reasoning model is often diverse and cannot be modelled through a specific notion of an equilibrium behaviour in all situations \citep{sarkar2022generalized,sun2020game}. Additionally, there may also be aspects of bounded rationality in play \citep{schmidt2014prospect}. One way to estimate utilities from observational data is by using the concept of \emph{rationalisability}, that is, identifying the aggregation parameters that would make the observed agent decision optimal with respect to a reasoning model. However, this is still challenging since not only are there different modalities of aggregation, but also the definition of optimality depends on the various strategic and non-strategic reasoning models involved. When there are so many unobservable factors, the problem is inherently under-specified, meaning that the same observation can be explained by different combinations of reasoning model and utility aggregation.
In this paper, we address the problem of estimating the parameters of multi-objective utility aggregation from observational data by developing different methods for popular classes of utility aggregation, namely weighted and satisficing aggregation, and different underlying reasoning models that include both strategic and non-strategic reasoning. \par
%The utility $U$ was multi-objective, with safety and progress being the two dimensions, and these utilities were scaled using weighted aggregation using fixed weights. This process of fixing the aggregation parameter to a specific value is commonly encountered in the literature that uses multiobjective utilities \citep{wei2014behavioral,tian2018adaptive,medina2021merit}, especially since it helps focus the game construction and evaluation on models rather than introduce another free variable. However, addressing the question of how drivers aggregate safety and progress in their decision making is a critical question in the context of driving and requires further investigation. To this end, the goal of this paper is to develop methods that can estimate the aggregation parameters involved in $U$ that are rationalisable based on a given model of reasoning and empirical observation of naturalistic behaviour.\par
Analysing the aggregation process helps us gain a basic understanding of drivers' preferences under different driving situations, and answering questions such as observed association between the situational state and the utility aggregation. For example, if a driver is observed to have a higher than expected speed close to an intersection, can we infer something about their aggregation parameter based on that observation --- maybe that a driver in that context may weigh progress more than safety? From a modelling point of view, we obtain a more accurate identification of agent utilities, which is especially relevant for behavioural models, since in the absence of such an analysis, any behavioural model, no matter how incorrect the utilities are, can explain away deviations of observed behaviour under boundedly rational behaviour. The line of inquiry of estimating some aspect of agent utilities from observations is related to the problem of inverse reinforcement learning \citep{sadigh2016planning, kim2016socially}, inverse game theory \citep{kuleshov2015inverse}, theory of revealed preference \citep{crawford2014empirical}, and multi-criteria decision making in operations research \citep{doumpos2013multicriteria, jacquet2001preference}. In the related work section, we highlight the contribution of this paper in light of the extensive literature on these topics. \par
In the context of estimating utilities, it is important to note that there are two separate questions. First, the question of form, i.e., estimating the form and parameters of the utility function, say $u_{\text{safety}}(\delta):\mathbf{R}^{+} \rightarrow [-1,1]$ that maps the choices (e.g., the distance gap $\delta$) into a utility interval [-1,1]. This has been well studied within the literature of revealed preference with specific behavioural theories such as time discounting of utilities \citep{dziewulski2018revealed}, risk aversion \citep{echenique2015savage}, and prospect theory of loss aversion \citep{werner2019revealed}, some of which have been applied to driving \citep{schmidt2014prospect} and robotics \citep{kwon2020humans}. However, a second question around multi-objective utilities, i.e. how to estimate aggregation parameters of multi-objective utilities based on consistency of observations and reasoning model, has received comparatively less attention in the context of AV or human driving--- and that is the focus of this work. Specifically, we address the following questions.
\begin{itemize}
    \item \textbf{Aggregation}: Given a multi-objective utility $U$ and a parametric scalarization function $S(U,\theta)$, how do we estimate $\theta$ that is \emph{rationalisable} with a set of observed choices of all agents conditioned on a model of reasoning?
    \item \textbf{Bounded rationality}: How can the estimation of $\theta$ accommodate nonstrategic reasoning models such as \texttt{maxmax} or \texttt{maxmin}? 
    \item \textbf{State association}: Is there an observed association between state factors such as velocity, traffic situation, etc., and $\theta$? In other words, are the parameter values stable across different traffic situations?
    \item \textbf{Model performance}: How does the performance of different behaviour models change when utilities $U$ are constructed based on a learning-based technique that infers the aggregation parameter $\theta$ from the data?
\end{itemize}
The question of aggregation is addressed by constructing axiomatic conditions under which a set of observations is \emph{rationalisable} using a given parameterized aggregation method, namely weighted aggregation and satisficing aggregation. As a part of the second question, we show that such a construction is different for strategic and nonstrategic models, where the former can be formulated by a set of linear constraints and the latter as a set of nonlinear constraints. For the third question, we estimate the rationalisable parameters for different traffic situations and evaluate whether there are significant situational differences in how drivers aggregate the utilities, namely safety and progress. Finally, by treating the state factors as independent variables and the aggregation parameter as dependent, we use the data to learn a regression model (CART) that can predict the aggregation parameter in new situations, and use that method to evaluate the performance of the popular classes of behaviour models proposed in the literature for driving. 

\section{Related work}
This section spans three different fields of research that deal with similar problems in their own right. Namely, literature on the theory and applications of reveled preference from economics, multicriteria decision making from operations research, and inverse reinforcement learning from robotics and computer science.\par
\noindent \emph{Theory of revealed preferences: } The main problem addressed in this paper, i.e., estimation of agent preferences given a set of observations and a model, falls under the scope of the theory of revealed preferences. The literature on connection between revealed preference and utilities has built upon Afriats's approach \citep{afriat1967construction}, which defines axioms of existence of a utility function $u$ that can \emph{rationalise} a set of observed behaviours. Although most of the literature is focused on aggregate consumer demand problems \citep{varian2012revealed, demuynck2019samuelson}, revealed preference conditions can also be constructed for noncooperative strategic models such as Nash equilibrium \citep{cherchye2011revealed}; and Chambers et al. \citep{chambers2017general} lay the universality and existence conditions of such a construction for any model beyond just equilibrium. Covering the extensive literature on revealed preference in economics is outside of the focus and scope of this work; therefore, we refer to \citep{demuynck2019samuelson} as a good reference for that general literature. \par
Most economics models are based on rational choices, and given that this work builds models that include non-strategic behaviour (boundedly rational agents), it is relevant to include literature on revealed preference that is based on behavioural economics. Crawford \citep{crawford2014empirical} presents a review of the literature on revealed preference that covers behavioural theories and links to empirical evidence. Dziewulski \citep{dziewulski2018revealed} constructs the revealed preference conditions based on a model in which a single agent uses time-based discounting of their utilities with various discounting models such as quasi-hyperbolic and exponential. In contrast, this paper uses a model that is simpler in some way (one-shot game as opposed to dynamic game) and more complex in other way (multi-agent behaviour). Application of the construction from \citep{dziewulski2018revealed} for the case of driving in dynamic semi-cooperative setting is an interesting future direction of research, especially since discounted utilities are standard in reinforcement learning (RL) based methods, and RL has received a lot of attention from the AV community in recent years. Another behavioural attribute, altruism, has been consistently observed in an empirical setting, especially in the context of dictator games \citep{ben2004reciprocity}. Andreoni and Miller \citep{andreoni2002giving} set the construction of the revealed preference with respect to altruistic behaviour in a dictator game and find that only a quarter of the participants were selfish money maximisers and the rest passed the test of altruistic behaviour. More recently, Porter and Adams \citep{porter2016love} study revealed preference with respect to altruistic behaviour in the context of intergenerational wealth transfer, that is, transfer of money from an adult child to ageing parents. The study in \citep{porter2016love} varies different models of utility, from pure selfish behaviour to pure altruism, and finds that although more than 90\% of the participants pass the test of revealed preference (i.e., behaviour consistent with the models and utilities), there were differences observed based on whom they were playing the game against, whether parents or strangers. Similarly to \citep{porter2016love}, we vary the utility construction (different models of aggregation), construct the revealed preference conditions, and test on empirical data (\citep{porter2016love} is based on a laboratory experiment) to evaluate what proportion of behaviour passes those conditions. However, the models and applications in this paper are, of course, quite different. Overall, although the above works have treated different behavioural attributes with respect to theory of revealed preference well, to our understanding there is no existing work on revealed preference that is based on multi-objective utilities and non-strategic reasoning models in the context of driving behaviour.

\noindent \emph{Multi-criteria decision analysis (MCDA)} Another strand of literature that is related to this paper is on multi-criteria decision making from operations research \citep{doumpos2013multicriteria}. Compared to the theory of revealed preference, where the focus is more on the model of decision making, in MCDA, the focus is on multiobjective nature of the utilities. The process of estimating the parameters of the aggregation process that an agent uses is called preference disaggregation (a terminology we retain in the paper), and Jacquet-Lagreze and Siskos \citep{jacquet2001preference} provide a review of the tools and techniques for that purpose until the year 2000. From a set of datapoints of ranked choices made by an individual, typical algorithms solve the general minimisation problem $\argmin\limits_{\mathbf{w}} ||\mathcal{R}(X)^{o},\mathcal{R}(X,\mathcal{A}_{\mathbf{w}})||$, where $\mathcal{R}(X)^{o}$ is the observed ranking of the alternatives by the agent, $\mathcal{R}(X,\mathcal{A}_{\mathbf{w}})$ is the ranking based on the aggregation model $\mathcal{A}$ parameterized on $\mathbf{w}$ on the same set of alternatives $S$. Standard algorithms, such as UTA \citep{jacquet1982assessing} formulate the solution as a mathematical programming problem, and in recent years, statistical learning methods similar to those we use (CART) have also been used \citep{doumpos2013multicriteria}. There are few differences between the MCDA methods and those in this paper. First, in our case, the models we study are strategic (and non-strategic) decisions, thereby adding another layer of complexity. Second, our problem in this paper is also less well-defined, since we do not have access to the drivers' ranking of the preferences, but rather only a singular choice of the observed action.

\noindent \emph{Inverse reinforcement learning (IRL):}
Although IRL \citep{ng2000algorithms} is conceptually different from the methods presented in this paper, it is relevant to include some recent works in the literature due to the interest and application of IRL for autonomous driving. IRL formulates the problem of estimating agents' behaviour as a single agent problem as opposed to the game theoretic approach of treating the problem as one of multi-agent behaviour with support for different reasoning processes. Another salient distinction in IRL is that it typically retrieves the utility $U$ that fits the observed behaviour best without referencing utility to prespecified dimensions of safety, progress, comfort, etc., but rather uses a single objective function that may or may not have a semantic meaning. Sadigh et al. \citep{sadigh2016planning} use IRL to first learn a policy of behaviour from demonstrations and subsequently use that in a game theoretic based planning module using a level-k (k = 2) \citep{costa2001cognition} type solution concept, although the solution concept is not explicitly stated as such in the paper. As a mathematical formulation, such an approach works well in practise because IRL can provide a best-response type behaviour to the (other) agent action; however, the implicit assumption that the agents adhere to a single model of reasoning throughout every interaction might be a strong one. Nevertheless, the authors show practical ways to integrate a single agent method such as IRL into a game-theoretic setting.

A recent work on learning preference along multiple criteria with a game theoretic view and also in the context of driving is by Bhatia et al. \citep{bhatia2020preference}, where agent utilities are learnt with respect to a solution concept developed based on the Blackwell approachability theorem \citep{blackwell1956analog}. Compared to \citep{bhatia2020preference}, in this work, we use non-zero sum games and pure strategies in terms of the game constructs, as well as focus on multiple solution concepts. Additionally, in this paper, we also learn the preferences of drivers based on real-world observational data.  
\section{Aggregation}
\label{sec:aggregation}
The general problem of aggregation for an agent is the transformation of a vector valued utility function $U_{i}$ to a scalar valued function $u_{i}$ in order to solve the game in question. In other words, this involves the construction of a scalarization function $ \mathcal{S}(U_{i}(a_{i},a_{-i}),\theta_{i})$ that maps the multiobjective vector of utilities for agent $i$, $U_{i}(a_{i},a_{-i})$, to the real value utility $u_{i}(a_{i},a_{-i})$ based on the parameter $\theta_{i}$.
\subsection{Weighted Aggregation}
Weighted aggregation is a linear combination of individual utility objectives as follows.
\begin{equation}
    \mathcal{S}(U_{i}(a_{i},a_{-i}),\mathbf{w}_{i}) = \mathbf{w}_{i} \cdot U_{i}(a_{i},a_{-i})
\end{equation}
The above equation is simply the dot product between the aggregation parameter ($\mathbf{w}_{i}$ in this case) and the vector valued utility function. The disaggreagation process involves the estimation of the weight vector $\mathbf{w}_{i}$ based on the observed actions of the agents in the game.

\subsection{Satisficing Aggregation}
A driver always operating at their own subjective tolerance level of risk has been a well established model of behaviour in traffic psychology \citep{wilde1982theory, fuller1984conceptualization}, and the model has also been empirically validated \citep{lewis2010s}. Lexicographic thresholding is a method of aggregation that is based on \emph{satisficing} and encapsulates two concepts, namely, ordered criteria of objectives and a thresholding effect \citep{LiChangjian19}. In lexicographic thresholding, an agent ranks the objective criteria based on a fixed and strict total order, for example, safety $>$ progress $>$  comfort. In this work, we focus on safety and progress with a lexicographic ordering of safety $>$ progress. The aggregation of the two utilities into a scalar value is given by
\begin{equation}
    \mathcal{S}(U_{i}(a_{i},a_{-i}),\gamma_{i})=
\begin{cases}
  u_{s,i}(a_{i},a_{-i}), & \text{if }
       \begin{aligned}[t]
       u_{s,i}(a_{i},a_{-i})&\leq \gamma_{i}
       \end{aligned}
\\
  u_{p,i}(a_{i},a_{-}), & \text{otherwise}
\end{cases}
\label{eqn:satis_form}
\end{equation}
\noindent where $\gamma_{i}$ is the \emph{safety aspiration level} of agent $i$; $u_{s,i}(a_{i},a_{i})$ is the safety component of the vector valued function $U_{i}$; and $u_{p,i}(a_{i},a_{i})$ is the progress component. Based on the above formulation, an agent evaluates an action of multivalued utility based on progress rather than safety only when the safety utility of that action is greater than $\gamma_{i}$. The disaggregation process for the lexicographic thresholding method involves estimating the parameter $\gamma_{i}$ based on the observed action of the agent $i$ in the game.

\section{Multiobjective disaggregation}
We address the problem of estimating the aggregation parameters for an agent given their observed action in a game. This involves estimating the weight parameters $\mathbf{w}$ for the case of the weighted aggregation method, and the safety aspiration level parameter $\gamma$ for the case of the satisficing aggregation method. Additionally, since the choice of reasoning model (strategic or non-strategic) influences the behaviour of the agent in a game, we will develop separate methods based on strategic and non-strategic reasoning assumptions.\par
We start with the following definition of what \emph{rationalisability} means in the context of disaggregation of multiobjective utilities.
\begin{defi}
\label{chrv:defi:rationalisable_basic}
Given a normal form game $\mathcal{G}$, a vector-valued utility $U_{i}$, a solution concept $\mathcal{B}$, and a tuple of observed action $(a_{i}^{o},a_{-i}^{o})$, an aggregation parameter $\theta_{i}$ is rationalisable iff $(a_{i}^{o},a_{-i}^{o})$ is in the solution set of $\mathcal{G}$ solved with the solution concept $\mathcal{B}$ with scalarized utility $\mathcal{S}(U_{i},\theta_{i})$.
\end{defi}
Based on a dataset of observations $(a_{i}^{o},a_{-i}^{o})$ in various game situations, the goal is to estimate the rationalisable $\theta_{i}$ for each agent in each game situation. We first do this for the case where $\mathcal{S}$ is the weighted aggregation function, followed by the case where it is the satisficing aggregation function.
\begin{figure}[t]
\centering
\includegraphics[width=.75\columnwidth]{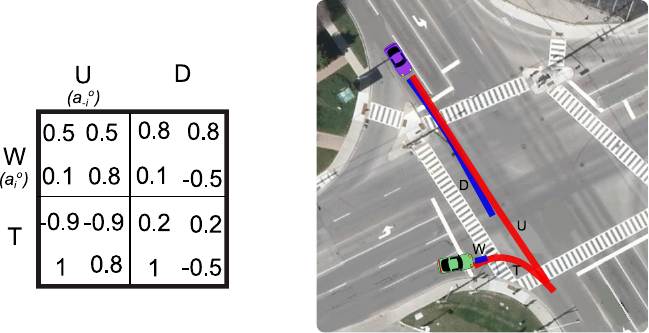}
  \caption{An example of right turning scenario with actions \textit{Turn} (T) and \textit{Wait} (W) with oncoming vehicle \textit{Speed up} (U) or \textit{Slow Down} (D). The first row in each cell is safety utility and second row is progress utility.}
 \label{fig:weighted_strategic_example}
\end{figure}
\subsection{Weighted Aggregation}
Since the observed actions depend on the underlying reasoning models used by the player, we first present the methods for strategic reasoning models (taking Nash equilibrium as an example), followed by non-strategic reasoning models of \texttt{maxmax} and \texttt{maxmin} \citep{wright2020formal}.
\subsubsection{Strategic models}
For strategic models, for the observed action, $a_{i}^{o}$, of a strategic agent $i$ to be in the solution set, the action needs to be the best response to the action that $i$ believes $-i$ will play. We use the case of Nash equilibrium in this section, where $a_{i}^{o}$ and $a_{-i}^{o}$ are best responses to each other. A running example of a right-turning scenario (Fig. \ref{fig:weighted_strategic_example}) elaborates the estimation process. A prototypical game for the scenario is shown on the right. For each combination of pure strategies, the top row utility values in each cell represent the safety utility and the bottom row represent the progress utility of the right turning vehicle (row player) and straight through (column player) vehicle, respectively. Let's say the observed strategy in this game was ($W,U$). In that case, for the right turning vehicle, for $W$ to be the best response to the observed action of $-i$, (i.e., $U$), the necessary and sufficiency conditions are $w_{i,s}\times 0.5 + w_{i,p} \times 0.1 \geq w_{i,s}\times -0.9 + w_{i,p} \times 1$ and $w_{i,s}+w_{i,p}=1$, where $w_{i,s}$ and $w_{i,p}$ are agent $i$'s weights for safety and progress utilities respectively. In the general case of arbitrary number of finite discrete actions and $|O|$ being the number of objectives, this can be formulated as a linear program (LP) and the rationalisable weights of agent $i$ can be estimated as the solution to the following LP
\begin{equation*}
\begin{array}{l@{}ll}
\text{maxmize}  & \displaystyle\sum\limits_{j=1}^{|O|} w_{i,j}u_{i,j}(a_{i}^{o},a_{-i}^{o}) &\\
\text{subject to}& \displaystyle\sum\limits_{j=1}^{|O|} w_{i,j}(u_{i,j}(a_{i}^{o},a_{-i}^{o})-u_{i,j}(a_{i}^{'},a_{-i}^{o})) \geq 0,  &\forall a_{i}^{'} \neq a_{i}^{o}\\
                 &     \displaystyle\sum\limits_{j=1}^{|O|} w_{i,j} = 1                                          &
\end{array}
\end{equation*}
In the above LP, we select weights in the feasible set that maximise the utility of the chosen action; however, any combination of weights that fall into the feasible set based on the constraints would be consistent with the conditions of rationalisability.
\subsubsection{Non-strategic models}
In the case of a non-strategic model, an agent is not \emph{other responsive} but only \emph{dominant responsive} \citep{wright2020formal}. In other words, since they do not reason about the actions of the other agents, it is not possible to pin down a specific action of the other agent ($a_{-i}^{o}$) with respect to which agent $i$ calculates its best response. In this model, the players do not best respond based on a specific belief about other agents' actions, but rather evaluate actions based only on their own utility values, and in our case, choose an action based on an elementary \texttt{maxmax} or \texttt{maxmin} model. This makes the process of estimating the weights slightly more complicated (read nonlinear) compared to the strategic case. Following from the example of Fig. \ref{fig:weighted_strategic_example}, for action $W$ to be the optimal action for agent $i$ (based on the non-strategic model \texttt{maxmax}), the maximum utility for agent $i$ that can be realised by choosing $W$ in the aggregate form post scalarization should be greater or equal to the maximum utility that can be realized by choosing $T$. Therefore, the necessary and sufficiency conditions for the non-strategic case in this example are $\text{max}\{w_{i,s}\times 0.5+w_{i,p}\times 0.1, w_{i,s}\times 0.8+w_{i,p}\times 0.1\} \geq \text{max}\{w_{i,s}\times -0.9+w_{i,p}\times 1, w_{i,s}\times 0.2+w_{i,p}\times -0.5\}$ and $w_{i,s}+w_{i,p}=1$. The left term in the inequality gives the maximum realised utility for the action $W$ and the right term is the maximum realised utility for the action $T$. The process of estimating the weights in the non-strategic case can therefore be formulated as a nonlinear optimisation problem as follows.
\begin{equation*}
\begin{array}{l@{}ll}
\text{maxmize}  & \displaystyle\sum\limits_{j=1}^{|O|} w_{i,j}u_{i,j}(a_{i}^{o},\argmax\limits_{a_{-i}}w_{i,j}u_{i,j}(a_{i}^{o},a_{-i})) &\\
\text{subject to}& \displaystyle\sum\limits_{j=1}^{|O|} w_{i,j}(u_{i,j}(a_{i}^{o},\argmax\limits_{a_{-i}}w_{i,j}u_{i,j}(a_{i}^{o},a_{-i}))&\\&-u_{i,j}(a_{i}^{'},\argmax\limits_{a_{-i}}w_{i,j}u_{i,j}(a_{i}^{'},a_{-i}))) \geq 0,  &\forall a_{i}^{'} \neq a_{i}^{o}\\
                 &     \displaystyle\sum\limits_{j=1}^{|O|} w_{i,j} = 1                                          &
\end{array}
\end{equation*}
Due to the presence of the \texttt{argmax} operator, the above problem changes to a nonlinear optimisation problem. Similarly, for \texttt{maxmin} non-strategic models, the process of estimating the weights is identical except that the \texttt{argmax} operator is replaced by the \texttt{argmin} operator. In the latter case, the \texttt{argmin} operator gives the minimum realisable utility of the observed action. In our experiments, for both \texttt{maxmax} and \texttt{maxmin} models, we solve the above optimisation problem using a trust region based method \citep{curtis2010interior}.
\subsection{Satisficing Disaggregation}
The estimation process for the satisficing method involves estimating the parameter $\gamma_{i}$ based on the observed action of the agent $i$ in the game. Similar to the weighted aggregation case, the method of estimation depends on the underlying model due to the assumption an agent has over other agents' behaviour and the subsequent impact on the optimality calculations based on that agent's perspective. However, unlike in the weighted aggregation case, due to the thresholding effect, it is not straightforward to construct a functional form of the scalar utility over which the optimality conditions can be built. Instead, we develop an algorithmic estimation process that helps estimate the complete set of rationalisable aggregation parameter $\gamma$. 
\subsubsection{Strategic model}
\begin{algorithm}[htp]
\caption{Estimation of $\Gamma_{i}$ based on consistency with respect to satisficing aggregation}
\label{algo:gamma_est_strategic}
  %\setcounter{AlgoLine}{0}
  %\SetKwFunction{algo}{estimate\_gamma\_strategic}\SetKwFunction{proc}{consistent\_set}

   \KwResult{$\Gamma{i}$}
   \SetKwInOut{Input}{Input}
   \Input{$(a_{i}^{o},a_{-i}^{o})$}
   $P \gets$\text{partition}($[-1,1],<$)\;
   $\Gamma_{i} \gets \{\varnothing\}$ \;
   \For{$I \in  P$}{
     $\gamma \gets$\text{sample}($I$)\;
     \If {\text{is\_rationalisable}($\gamma$)}{
        $\Gamma_{i} \gets \Gamma_{i} \cup I$
     }
    }

\end{algorithm} 
\sloppy Based on Eqn. \ref{eqn:satis_form}, the aggregation process for lexicographic thresholding can be expressed in a parametric form as $u_{i}(a_{i},a_{-i}) = \mathcal{S}(U_{i}(a_{i},a_{-i}),\gamma_{i})$ where $U_{i}(a_{i},a_{-i}) = [u_{i,s}(a_{i},a_{-i}),u_{i,p}(a_{i},a_{-i})]$ and $\mathcal{S}$ is the scalarization function of Eqn. \ref{eqn:satis_form}. We present an adapted definition of rationalisability for strategic models as follows:
\begin{defi}
\label{chrv:def:1}
For any agent $i$, a safety aspiration level $\gamma_{i} \in [-1,1]$ is equilibrium rationalisable with strategy profile ($a_{i}^{o},a_{-i}^{o}$) iff $\mathcal{S}(U_{i}(a_{i}^{o},a_{-i}^{o}),\gamma_{i}) \geqslant \mathcal{S}(U_{i}(a_{i}^{'},a_{-i}^{o}),\gamma_{i})$ $ \forall a_{i}^{'} \neq a_{i}^{o}$
\end{defi}
The above definition follows from the definition \ref{chrv:defi:rationalisable_basic} with an explicit reference to the condition of optimality of the equilibrium solution, that is, for the safety aspiration level of the agent $i$ to be rationalisable, their observed action $a_{i}^{o}$ must be the best response to the action of the other agents $a_{-i}^{o}$. Algo. \ref{algo:gamma_est_strategic} presents the general algorithm to estimate the rationalisable parameter. The intuition behind the algorithm is as follows: the value of the parameter $\gamma_{i}$ lies within the utility interval [-1,1]. Let $P=\{I_{1},I_{2},...,I_{P}\}$ be an ordered partition of the interval [-1,1]; the process of constructing the partition depends on the underlying models of reasoning and is explained later. We sample a single value of $\gamma \in I$ and check if the scalarization $\mathcal{S}$ based on that sampled value is rationalisable with respect to the definition \ref{chrv:def:1}. If so, we include the partition $I$ from which $\gamma$ was sampled in the set of rationalisable parameter set $\Gamma$, and the union of these sets is the set of rationalisable $\gamma$. Next, we set the condition under which the algorithm will be sound and complete.\par
\begin{proposition}
\label{prop:1}
Algorithm \ref{algo:gamma_est_strategic} is sound and complete based on a partition $P$ iff $\forall I \in P$,  \texttt{is\_rationalisable}($\gamma$) $\leftrightarrow$ \texttt{is\_rationalisable}($\gamma'$) $\forall \gamma, \gamma' \in I$
\end{proposition}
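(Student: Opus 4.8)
The plan is to prove the two directions of the biconditional separately, after fixing conventions: I read \emph{sound} as $\Gamma_i \subseteq \{\gamma \in [-1,1] : \texttt{is\_rationalisable}(\gamma)\}$, \emph{complete} as the reverse inclusion, and --- crucially --- I require the soundness/completeness guarantee to hold for \emph{every} admissible return value of \texttt{sample}, since Algorithm~\ref{algo:gamma_est_strategic} commits an entire cell to $\Gamma_i$ on the basis of a single representative. I would state this convention explicitly before starting, because the proposition is false if \texttt{sample} is instead treated as a fixed, benevolent oracle.

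For the ``if'' direction, suppose that for every $I \in P$ the predicate \texttt{is\_rationalisable} is constant on $I$. Fix an arbitrary run. The loop adds $I$ to $\Gamma_i$ iff the sampled $\gamma \in I$ satisfies \texttt{is\_rationalisable}($\gamma$), which by constancy holds iff \emph{every} point of $I$ is rationalisable. Hence $\Gamma_i$ is exactly the union of those cells consisting entirely of rationalisable parameters; since $P$ partitions $[-1,1]$, this union equals $\{\gamma : \texttt{is\_rationalisable}(\gamma)\}$, giving soundness and completeness simultaneously.

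For the ``only if'' direction I would argue by contraposition. Assume constancy fails on some $I \in P$, so there exist $\gamma,\gamma' \in I$ with \texttt{is\_rationalisable}($\gamma$) true and \texttt{is\_rationalisable}($\gamma'$) false. In the run where \texttt{sample}($I$) returns $\gamma$, the algorithm puts all of $I$ into $\Gamma_i$, so $\gamma' \in \Gamma_i$ even though $\gamma'$ is not rationalisable --- soundness fails. In the run where \texttt{sample}($I$) returns $\gamma'$, the cell $I$ is discarded; because the cells of $P$ are pairwise disjoint, $\gamma$ lies in no other cell and so is absent from $\Gamma_i$ even though it is rationalisable --- completeness fails. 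Either way the guarantee is violated, which establishes necessity.

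I expect the only genuine subtlety to be this quantifier over \texttt{sample}; once it is pinned down, both directions are short. A secondary point deserving one sentence is that disjointness of the cells of $P$ is precisely what prevents a rationalisable parameter ``lost'' in its own cell from being silently recovered through another cell --- this is the sole place where the partition structure (rather than an arbitrary cover) is used, and the open/closed boundary conventions for the cells are irrelevant provided each point of $[-1,1]$ lies in exactly one cell.
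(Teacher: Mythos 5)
Your proof is correct and takes essentially the same route as the paper, which only sketches this argument informally (the constancy condition is precisely what licenses testing a single sampled representative per cell, and the partition structure is what makes the cell-level decision exhaustive); your explicit two-direction case analysis is a valid formalization of that sketch. One quibble: your opening claim that the proposition would be \emph{false} if \texttt{sample} were a fixed benevolent oracle is itself incorrect --- since $\Gamma_{i}$ is always a union of whole cells, it cannot coincide with the set of rationalisable parameters once constancy fails on some cell, so (exactly as your own contrapositive case analysis shows) one of soundness or completeness fails for \emph{any} returned value, and the proposition holds under either convention for \texttt{sample}.
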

Implementing the \emph{is\_rationalisable} method based on the definition \ref{chrv:def:1} ensures soundness; this is because the utility maximizing action (condition of definition \ref{chrv:def:1}) in response to an equilibrium action means that the said action is in equilibrium, and therefore (correctly) rationalisable. The bidirectional implication condition of proposition \ref{prop:1} ensures that if we sample only a single value $\gamma$ from an interval $I \in P$ and check for rationalisability, then any $\gamma'$ in that interval that was not sampled is also rationalisable. Next, we construct a partition for which the double-implication condition of proposition \ref{prop:1} holds.
Given a game, let the partition $P_{\text{eq}}$ consist of the ordered safety utility of the agent $i$'s action as follows 
\begin{equation*}
P_{\text{eq}} = \{[-1, u_{i,s}(a_{i,1},a_{-i}^{o})),[ u_{i,s}(a_{i,1},a_{-i}^{o}), ...), [u_{i,s}(a_{i,|A_{i}|},a_{-i}^{o}), 1]\}
\end{equation*}
where $u_{i,s}(a_{i,1},a_{-i}^{o}) \leqslant ... \leqslant u_{i,s}(a_{i,|A_{i}|},a_{-i}^{o})$ is the ordered sequence of the safety utility values of agent $i$. An example partition for the game with respect to the row player (right turning vehicle) in response to the action $U$ (the observed action) of the column player of Fig. \ref{fig:weighted_strategic_example} is shown in Fig. \ref{fig:partition_line}.\\
\begin{figure}[h]
\centering
\includegraphics[width=.75\columnwidth]{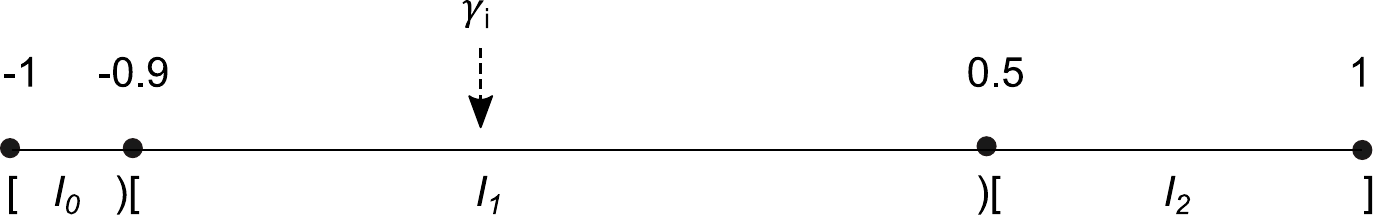}
  \caption{The partition intervals $I_{0},I_{1},..$ of $P$ based on the game of Fig. \ref{fig:weighted_strategic_example}.}
 \label{fig:partition_line}
\end{figure}
\begin{theo}
\label{chrp:theo_eq_rat}
For any interval $I \in P_{\text{eq}}$, if $\gamma \in I$ is equilibrium rationalisable, then $\forall \gamma' \in I$, $\gamma'$ is equilibrium rationalisable. Conversely, if $\gamma \in I$ is not equilibrium rationalisable, then $\forall \gamma' \in I$, $\gamma'$ is not equilibrium rationalisable.
\end{theo}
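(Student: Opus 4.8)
The plan is to show that the scalarized utility $\mathcal{S}(U_i(a_i,a_{-i}^o),\gamma_i)$, regarded as a function of $\gamma_i$ only, is constant on each interval $I\in P_{\text{eq}}$ for every fixed action $a_i$. Since the rationalisability test of Definition \ref{chrv:def:1} is built entirely out of such scalarized values evaluated at a common $\gamma_i$, rationalisability will then be seen to be an all-or-nothing property of the interval, which is exactly the double-implication asserted by the theorem (the direct statement together with its contrapositive).

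First I would note, directly from Eqn. \ref{eqn:satis_form}, that $\mathcal{S}(U_i(a_i,a_{-i}^o),\gamma_i)$ depends on $\gamma_i$ solely through the truth value of the predicate $u_{i,s}(a_i,a_{-i}^o)\leqslant\gamma_i$: when it holds the scalarization equals $u_{i,s}(a_i,a_{-i}^o)$, and otherwise it equals $u_{i,p}(a_i,a_{-i}^o)$, neither expression involving $\gamma_i$. Hence it suffices to prove that for any $\gamma,\gamma'\in I$ and any action $a_i\in A_i$, the predicate $u_{i,s}(a_i,a_{-i}^o)\leqslant\gamma$ holds if and only if $u_{i,s}(a_i,a_{-i}^o)\leqslant\gamma'$ holds.

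Next I would invoke the precise construction of $P_{\text{eq}}$. Its breakpoints are exactly the ordered safety values $u_{i,s}(a_{i,1},a_{-i}^o)\leqslant\cdots\leqslant u_{i,s}(a_{i,|A_i|},a_{-i}^o)$, and each piece is closed on the left and open on the right (with the evident modifications $[-1,\cdot)$ and $[\cdot,1]$ at the ends). Fix $I=[\ell,r)\in P_{\text{eq}}$ and an action $a_i$, and set $v:=u_{i,s}(a_i,a_{-i}^o)$; by construction $v$ is one of the breakpoints, so either $v\leqslant\ell$ or $v\geqslant r$. If $v\leqslant\ell$, then every $\gamma\in I$ has $\gamma\geqslant\ell\geqslant v$, so the predicate holds throughout $I$; if $v\geqslant r$, then every $\gamma\in I$ has $\gamma<r\leqslant v$, so the predicate fails throughout $I$. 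In either case the predicate has the same truth value at $\gamma$ and at $\gamma'$. I expect the only delicate point to be this endpoint bookkeeping — checking that the left-closed/right-open convention of $P_{\text{eq}}$ is precisely what matches the weak inequality $u_{i,s}\leqslant\gamma_i$ in Eqn. \ref{eqn:satis_form}, and observing that repeated safety values only produce empty or degenerate pieces and so do not disturb the argument.

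Finally, combining the two observations gives $\mathcal{S}(U_i(a_i,a_{-i}^o),\gamma)=\mathcal{S}(U_i(a_i,a_{-i}^o),\gamma')$ for every $a_i\in A_i$. Therefore the whole family of inequalities $\mathcal{S}(U_i(a_i^o,a_{-i}^o),\cdot)\geqslant\mathcal{S}(U_i(a_i',a_{-i}^o),\cdot)$ over all $a_i'\neq a_i^o$ appearing in Definition \ref{chrv:def:1} evaluates identically at $\gamma$ and at $\gamma'$, so $\gamma$ is equilibrium rationalisable if and only if $\gamma'$ is. This establishes both halves of the theorem simultaneously, and in passing verifies the double-implication hypothesis of Proposition \ref{prop:1} for the partition $P_{\text{eq}}$.
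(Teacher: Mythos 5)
Your proposal is correct and follows essentially the same route as the paper's proof: both establish that $\mathcal{S}(U_i(a_i,a_{-i}^o),\cdot)$ is constant on each interval of $P_{\text{eq}}$ via the same two-case analysis (safety value at or below the left endpoint versus at or above the right endpoint, which is forced because every safety value is a breakpoint), and then conclude that all the comparison inequalities of Definition \ref{chrv:def:1} have the same truth value across the interval. Your added care about the left-closed/right-open convention matching the weak inequality in Eqn.~\ref{eqn:satis_form}, and about repeated safety values, is a welcome tightening but does not change the argument.
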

The proof is based on the intuition that if the partitions are constructed using ordered safety values of different actions, then any given threshold that falls between two such utilities imposes the same ordering of actions after scalarization since the conditions of Eqn. \ref{eqn:satis_form} remain unchanged.
\begin{proof}
\normalfont
\noindent Since the equilibrium rationalisability is based on the condition $\mathcal{S}(U_{i}(a_{i}^{o},a_{-i}^{o}),\gamma_{i}) \geqslant \mathcal{S}(U_{i}(a_{i}^{'},a_{-i}^{o}),\gamma_{i})$, we first show that $\mathcal{S}(U_{i}(a_{i},a_{-i}),\gamma_{i}) = \mathcal{S}(U_{i}(a_{i},a_{-i}),\gamma_{i}')$ $\forall \gamma, \gamma' \in I$, i.e., the scalarized value based on any two parameters that fall in the same interval is equal.\\
\noindent \textbf{Case} $u_{s,i}(a_{i},a_{-i}^{o}) \leqslant \min I$: In this case, $u_{s,i}(a_{i},a_{-i}^{o}) \leqslant \gamma$, $\forall \gamma \in I$. Therefore, $\mathcal{S}(U_{i}(a_{i},a_{-i}),\gamma_{i}) = \mathcal{S}(U_{i}(a_{i},a_{-i}),\gamma_{i}')$ since both evaluate to $u_{s,i}(a_{i},a_{-i}^{o})$ based on Eqn. \ref{eqn:satis_form}.

\noindent \textbf{Case} $u_{s,i}(a_{i},a_{-i}^{o}) > \min I$: In this case, $u_{s,i}(a_{i},a_{-i}^{o}) \geqslant \sup I$, since for any $u_{s,i}(a_{i},a_{-i}^{o}) \neq 1$, $u_{s,i}(a_{i},a_{-i}^{o}) = \min I$ when $u_{s,i}(a_{i},a_{-i}^{o}) \in I$ based on the construction of $P_{\text{eq}}$. Therefore, $\forall \gamma \in I$, $\gamma < u_{s,i}(a_{i},a_{-i}^{o})$, and $\mathcal{S}(U_{i}(a_{i},a_{-i}),\gamma_{i}) = \mathcal{S}(U_{i}(a_{i},a_{-i}),\gamma_{i}')$ since both evaluates to $u_{p,i}(a_{i},a_{-i}^{o})$ based on Eqn. \ref{eqn:satis_form}\\
\noindent Therefore for any $I \in P_{\text{eq}}$, the condition $\mathcal{S}(U_{i}(a_{i},a_{-i}),\gamma_{i}) = \mathcal{S}(U_{i}(a_{i},a_{-i}),\gamma_{i}')$ holds true for all $\gamma, \gamma' \in I$.\\
\noindent By the above equality condition, $\mathcal{S}(U_{i}(a_{i}^{o},a_{-i}^{o}),\gamma_{i}) \geqslant \mathcal{S}(U_{i}(a_{i}^{'},a_{-i}^{o}),\gamma_{i})$ $\leftrightarrow$ $\mathcal{S}(U_{i}(a_{i}^{o},a_{-i}^{o}),\gamma_{i}') \geqslant \mathcal{S}(U_{i}(a_{i}^{'},a_{-i}^{o}),\gamma_{i}')$, which establishes the biconditional relationship of the theorem based on the definition of equilibrium rationalisability (Defn. \ref{chrv:def:1}).
\end{proof}
Theorem \ref{chrp:theo_eq_rat} helps significantly reduce the number of consistency checks that we need to perform, since we need to check only one value in each interval in $P$ to determine whether all the values in that interval are rationalisable or not. This keeps the run-time complexity of Algo. \ref{algo:gamma_est_strategic} linear in the number of actions of the agent in the worst case (that is, $O(|A_{i}|)$), since the run time depends on the size of the partition $P_{\text{eq}}$, which in turn depends on the number of unique safety utilities, i.e., $|A_{i}|$ in the worst case.
\subsubsection{Non-strategic models}
Recall that for non-strategic models, an agent $i$ does not hold a specific belief about the action another agent might play, and therefore, similar to the weighted aggregation case, we cannot pin down a specific action $a_{-i}^{o}$ in response to which the parameters can be estimated. This leads to a revision of the rationalisability definition of Def. \ref{chrv:def:1} to make it independent of the actions of other agents for the \texttt{maxmax}
and \texttt{maxmin} models.
\begin{defi}
\label{chrv:def:2}
For any agent $i$, a safety aspiration level $\gamma_{i} \in [-1,1]$ is \texttt{maxmax} rationalisable with action $a_{i}^{o}$ iff $\max\limits_{a_{-i}} \mathcal{S}(U_{i}(a_{i}^{o},a_{-i}),\gamma_{i}) \geqslant \max\limits_{a_{-i}} \mathcal{S}(U_{i}(a_{i}^{'},a_{-i}),\gamma_{i})  \forall a_{i}^{'} \neq a_{i}^{o}$.\\
\end{defi}
\begin{defi}
\label{chrv:def:3}
For any agent $i$, a safety aspiration level $\gamma_{i} \in [-1,1]$ is maxmin rationalisable with action $a_{i}^{o}$ iff $\min\limits_{a_{-i}} \mathcal{S}(U_{i}(a_{i}^{o},a_{-i}),\gamma_{i}) \geqslant \min\limits_{a_{-i}} \mathcal{S}(U_{i}(a_{i}^{'},a_{-i}),\gamma_{i})  \forall a_{i}^{'} \neq a_{i}^{o}$.
\end{defi}
For the strategic case, we needed to check $a_{i}^{o}$ for rationalisability only as a response to a fixed action $a_{-i}^{o}$, and therefore it was sufficient to construct the partition $P_{\text{eq}}$ based only on the safety utilities for all actions that were in response to $a_{-i}^{o}$. However, for non-strategic models, the rationalisability of $a_{i}^{o}$ involves comparison with all entries of the safety utilities of agent $i$ in the game matrix. Therefore, to apply Prop. \ref{prop:1} for the non-strategic case, the partition points of $P$ need to include all the entries of the table as follows:
\begin{align*}
P_{\text{ns}} =& \{[-1, u_{i,s}(a_{i,1},a_{-i})),[ u_{i,s}(a_{i,1},a_{-i}),u_{i,s}(a_{i,2},a_{-i})),\\& ..., [u_{i,s}(a_{i,|A_{i}|},a_{-i}), 1]\}   
\end{align*}
where $u_{i,s}(a_{i,1},a_{-i}) \leqslant u_{i,s}(a_{i,2},a_{-i}) \leqslant ... \leqslant u_{i,s}(a_{i,|A_{i}|},a_{-i})$ is the ordered sequence of the safety utility values of agent $i$, and (with a minor abuse of notation) $a_{-i}$ steps through all the corresponding actions of the other agents based on that ordering. The only difference between $P_{\text{eq}}$ and $P_{\text{ns}}$ is that $P_{\text{ns}}$ is partitioned based on the safety utilities of $i$ in the entire game matrix, whereas $P_{\text{eq}}$ was based on the column corresponding to $a_{-i}^{o}$. This also has an impact on the runtime of the algorithm, which is $O(|A|^{N})$, where $N$ is the number of players in the game and $|A|$ is the number of actions for a player. This value is the same as the size of the game matrix since the partition is constructed from each safety utility value for each agent. The corresponding corollaries of Theorem \ref{chrp:theo_eq_rat} for the non-strategic case are as follows:
\begin{cor}
\label{chrp:cor_eq_rat_maxmax}
For any interval $I \in P_{\text{ns}}$, if $\gamma \in I$ is maxmax rationalisable, then $\forall \gamma' \in I$, $\gamma'$ is maxmax rationalisable. Conversely, if $\gamma \in I$ is not maxmax rationalisable, then $\forall \gamma' \in I$, $\gamma'$ is not maxmax rationalisable.
\end{cor}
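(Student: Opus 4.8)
The plan is to follow the same two-step structure as the proof of Theorem~\ref{chrp:theo_eq_rat}, adapting it to the larger partition $P_{\text{ns}}$ and to the \texttt{maxmax} rationalisability condition of Definition~\ref{chrv:def:2}. First I would establish the cell-wise invariance claim: for every action profile $(a_{i},a_{-i})$ and every interval $I \in P_{\text{ns}}$, the scalarized utility $\mathcal{S}(U_{i}(a_{i},a_{-i}),\gamma_{i})$ takes the same value for all $\gamma \in I$. The argument is essentially the one already used in Theorem~\ref{chrp:theo_eq_rat}: by construction of $P_{\text{ns}}$, every safety utility $u_{i,s}(a_{i},a_{-i})$ occurring anywhere in the game matrix is an endpoint of the partition, so it cannot lie strictly in the interior of any $I$; hence either $u_{i,s}(a_{i},a_{-i}) \leqslant \min I \leqslant \gamma$ for all $\gamma \in I$, in which case $\mathcal{S}$ evaluates to $u_{i,s}(a_{i},a_{-i})$ by Eqn.~\ref{eqn:satis_form}, or $u_{i,s}(a_{i},a_{-i}) \geqslant \sup I > \gamma$ for all $\gamma \in I$, in which case $\mathcal{S}$ evaluates to $u_{i,p}(a_{i},a_{-i})$. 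In both cases the branch of Eqn.~\ref{eqn:satis_form} selected is the same for every $\gamma \in I$, giving $\mathcal{S}(U_{i}(a_{i},a_{-i}),\gamma) = \mathcal{S}(U_{i}(a_{i},a_{-i}),\gamma')$ for all $\gamma,\gamma' \in I$.

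The key difference from Theorem~\ref{chrp:theo_eq_rat} — and the reason $P_{\text{ns}}$ must contain every entry of the matrix rather than just the column for $a_{-i}^{o}$ — is that here the invariance claim is needed for all cells $(a_{i},a_{-i})$ simultaneously, since the \texttt{maxmax} condition ranges over all $a_{-i}$. Once cell-wise invariance is in hand, I would lift it through the $\max$ operator: for each fixed $a_{i}$, the map $\gamma \mapsto \max_{a_{-i}} \mathcal{S}(U_{i}(a_{i},a_{-i}),\gamma)$ is a maximum, over the finite set of opponent actions, of functions each constant on $I$, hence itself constant on $I$. Therefore $\max_{a_{-i}} \mathcal{S}(U_{i}(a_{i}^{o},a_{-i}),\gamma) = \max_{a_{-i}} \mathcal{S}(U_{i}(a_{i}^{o},a_{-i}),\gamma')$, and likewise for every $a_{i}' \neq a_{i}^{o}$, for all $\gamma,\gamma' \in I$.

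Finally I would conclude by transferring the inequality in Definition~\ref{chrv:def:2}: since both sides are unchanged when $\gamma$ is replaced by any $\gamma' \in I$, the inequality $\max_{a_{-i}} \mathcal{S}(U_{i}(a_{i}^{o},a_{-i}),\gamma) \geqslant \max_{a_{-i}} \mathcal{S}(U_{i}(a_{i}',a_{-i}),\gamma)$ holds for one $\gamma \in I$ iff it holds for every $\gamma' \in I$, and this biconditional holds separately for each of the finitely many $a_{i}' \neq a_{i}^{o}$, so it is preserved under their conjunction. This is exactly the statement of the corollary, the contrapositive giving the ``conversely'' clause. I do not expect a genuine obstacle: the only point requiring care is making the interior-exclusion argument uniform over the whole game matrix, which is precisely what the definition of $P_{\text{ns}}$ is engineered to deliver; the remainder is a routine propagation of the Theorem~\ref{chrp:theo_eq_rat} argument through the additional $\max$.
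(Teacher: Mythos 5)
Your proposal is correct and follows essentially the same route as the paper: the paper's proof likewise reduces the corollary to the cell-wise invariance $\mathcal{S}(U_{i}(a_{i},a_{-i}),\gamma) = \mathcal{S}(U_{i}(a_{i},a_{-i}),\gamma')$ for all $\gamma,\gamma' \in I$, established exactly as in Theorem~\ref{chrp:theo_eq_rat} but over the whole game matrix via $P_{\text{ns}}$, and then notes that pairwise comparisons (and hence the \texttt{maxmax} condition) are invariant on each interval. Your explicit lifting of the invariance through the finite $\max$ is a detail the paper leaves implicit, but it is not a different argument.
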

\begin{cor}
\label{chrp:cor_eq_rat_maxmin}
For any interval $I \in P_{\text{ns}}$, if $\gamma \in I$ is maxmin rationalisable, then $\forall \gamma' \in I$, $\gamma'$ is maxmin rationalisable. Conversely, if $\gamma \in I$ is not maxmin rationalisable, then $\forall \gamma' \in I$, $\gamma'$ is not maxmin rationalisable.
\end{cor}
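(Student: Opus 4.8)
The plan is to mirror the proof of Theorem~\ref{chrp:theo_eq_rat} almost verbatim, the only new ingredient being that the relevant ``constancy on an interval'' statement must now hold for \emph{every} cell of the game matrix rather than only for the column indexed by $a_{-i}^{o}$. First I would isolate the core fact implicit in the proof of Theorem~\ref{chrp:theo_eq_rat}: for a fixed cell $(a_{i},a_{-i})$ and an interval $I$ whose endpoints are drawn from the set of safety-utility values (using the same half-open convention as in the construction of $P_{\text{eq}}$), the two-case split in Eqn.~\ref{eqn:satis_form} never switches inside $I$ --- either $u_{i,s}(a_{i},a_{-i})\le\gamma$ for all $\gamma\in I$, or $u_{i,s}(a_{i},a_{-i})>\gamma$ for all $\gamma\in I$ --- so $\mathcal{S}(U_{i}(a_{i},a_{-i}),\gamma)=\mathcal{S}(U_{i}(a_{i},a_{-i}),\gamma')$ for all $\gamma,\gamma'\in I$. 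These are exactly the two displayed cases in the proof of Theorem~\ref{chrp:theo_eq_rat}, now read for an arbitrary cell.

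Next I would observe that, by construction, $P_{\text{ns}}$ has its partition points at \emph{all} entries $u_{i,s}(a_{i},a_{-i})$ of agent $i$'s safety-utility matrix, not merely those in the column $a_{-i}^{o}$. Hence the core fact applies to every cell simultaneously: for any $I\in P_{\text{ns}}$ and any pair $(a_{i},a_{-i})$, the scalarized value $\mathcal{S}(U_{i}(a_{i},a_{-i}),\gamma)$ is independent of the choice of $\gamma\in I$.

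Then, for \texttt{maxmin}, I would set $g_{a_{i}}(\gamma):=\min_{a_{-i}}\mathcal{S}(U_{i}(a_{i},a_{-i}),\gamma)$ and note that, being a pointwise minimum of finitely many functions each constant on $I$, $g_{a_{i}}$ is itself constant on $I$. By Definition~\ref{chrv:def:3}, $\gamma\in I$ is \texttt{maxmin} rationalisable iff $g_{a_{i}^{o}}(\gamma)\ge g_{a_{i}'}(\gamma)$ for all $a_{i}'\ne a_{i}^{o}$; since every $g$ appearing here is constant on $I$, this finite conjunction of inequalities holds either for all $\gamma\in I$ or for none of them, which is precisely the biconditional in the statement (the ``conversely'' half being its contrapositive). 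The \texttt{maxmax} case (Corollary~\ref{chrp:cor_eq_rat_maxmax}) is identical with $\min$ replaced by $\max$ throughout and Definition~\ref{chrv:def:3} replaced by Definition~\ref{chrv:def:2}.

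I do not expect a genuine obstacle here --- this is a routine lift of Theorem~\ref{chrp:theo_eq_rat}. The one point that needs care is precisely the reason $P_{\text{ns}}$ is larger than $P_{\text{eq}}$: because the non-strategic optimality tests in Definitions~\ref{chrv:def:2}--\ref{chrv:def:3} range over all of $i$'s actions \emph{and} all opponent responses, the partition must refine at every safety-utility entry in the matrix, including those in the rows of the competing actions $a_{i}'$; omitting any of them would permit a case-switch inside some $I$ and break the argument. Making that explicit, together with the elementary observation that a finite $\min$ or $\max$ of interval-constant functions is again interval-constant, are the only things to verify.
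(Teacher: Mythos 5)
Your proposal is correct and follows essentially the same route as the paper's own proof: both reduce the claim to the observation that, because $P_{\text{ns}}$ is refined at every safety-utility entry of agent $i$'s matrix, the scalarization $\mathcal{S}(U_{i}(a_{i},a_{-i}),\gamma)$ is constant in $\gamma$ on each $I \in P_{\text{ns}}$ for every cell, so the comparisons in Definition~\ref{chrv:def:3} are invariant over $I$. Your explicit remark that a finite $\min$ of interval-constant functions is interval-constant is a small but welcome sharpening of a step the paper leaves implicit.
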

\begin{proof}
The proof of the above corollaries is similar to the proof of Theorem \ref{chrp:theo_eq_rat}. Observe that for the partition set $P_{\text{ns}}$, for any agent $i$, the aggregation of the utilities of $i$ is the same for any pair of $\gamma, \gamma' \in I$. This follows from the equality condition $\mathcal{S}(U_{i}(a_{i},a_{-i}),\gamma_{i}) = \mathcal{S}(U_{i}(a_{i},a_{-i}),\gamma_{i}')$, which holds for the partition $P_{\text{ns}}$ in the same way as was established for $P_{\text{eq}}$ earlier in Theorem \ref{chrp:theo_eq_rat}. This means that the pairwise comparison between the utilities of actions of $i$ is invariant to the value of $\gamma \in I$, thus establishing the conditions of definitions \ref{chrv:def:2} and \ref{chrv:def:3}.
\end{proof}
\section{Experiments and evaluation}
\label{sec:experiments}
\FloatBarrier
We evaluate the utility aggregation estimation methods using a multi-agent traffic drone data set from \citep{sarkar2021solution}, which includes several real-world traffic interactions. From the dataset, we include interaction scenarios of unprotected right-turn and left-turn across path at a busy four way signalized intersection (1667 2-player games), decisions about entering a roundabout (2441 N-player games), and decisions about waiting for a pedestrian at a crosswalk (288 N-player games). Each of these games are constructed from the perspective of a \emph{focal} agent about to enter the intersection, roundabout, and crosswalk, respectively. The intersection and roundabout games contain the interaction of the focal agent (a vehicle) with other vehicles, and the crosswalk scenario include games in which the other players are pedestrians. The relevant code is available at \url{git.uwaterloo.ca/a9sarkar/single-shot-hierarchical-games}. As part of our analysis, first, we evaluate the proportion of games in which a rationalizable parameter was found for each aggregation and reasoning model combination. Second, we evaluate the improvement in accuracy of player's predicted actions when the utilities are constructed using the proposed estimation methods.
\begin{small}
\subsection{Agent Utility Aggregation Parameters}
\FloatBarrier
\begin{table*}[ht]
\begin{center}
\begin{tabular}{p{1.5cm}l|cccccc}
    \toprule
    &&\multicolumn{2}{c}{Intersection}&\multicolumn{2}{c}{Roundabout}&\multicolumn{2}{c}{Crosswalk}\\
    \cmidrule(lr{1em}){3-8}
    &&Weighted&Satisficing&Weighted&Satisficing&Weighted&Satisficing\\
    \midrule
    Strategic&Nash& 100\% & 68\% & 100\% & 96.9\% & 100\% & 100\% \\
    \midrule
    \multirow{2}{*}{Non strategic}&maxmax& 100\% & 72.2\% & 100\% & 56.8\% & 100\% & 66.9\% \\
    &maxmin& 100\% & 72.28\% & 100\% & 56.87\% & 100\% & 64.3\% \\
    \bottomrule
\end{tabular}
\end{center}
\caption{Pass rate of estimated preferences for each model, aggregation method, and dataset.}
\label{chrv:tab:pass_rate}
\end{table*}
\end{small}
The first point of analysis is the pass rate for each model, that is, the percentage of games in which a rationalisable parameter was found for each model and aggregation method combination (Table \ref{chrv:tab:pass_rate}). Based on the estimation procedure, we see that the chosen action of drivers can be rationalised by a weighted aggregation parameter in all cases for all models. For satisficing aggregation, the pass rate is sensitive to the specific traffic situation and the choice of the reasoning model. In the roundabout and crosswalk scenarios, rationalisable parameters for strategic models could be estimated for almost all games (96.9\% and 100\%, respectively), whereas for non-strategic models it could only be found for 56.8\% to 66.9\% of the games depending on the specific solution concept. There are two possible reasons why weighted aggregation parameters show higher rationalisability. First, at least for non-strategic models, the optimisation method involves an approximate procedure (in the form of the use of trust region based method of \citep{curtis2010interior}), which ends up finding a solution, albeit approximate, more easily than the corresponding exact estimation procedure for satisficing based methods. Second, it might be possible that a weighted aggregation methodology as opposed to a satisficing based procedure makes the chosen action more optimal with respect to rationalisability constraints. The actual parameter values of the weights shed more light on this aspect, which is discussed next.\par
\begin{figure}[t]
\centering
\includegraphics[width=0.9\columnwidth]{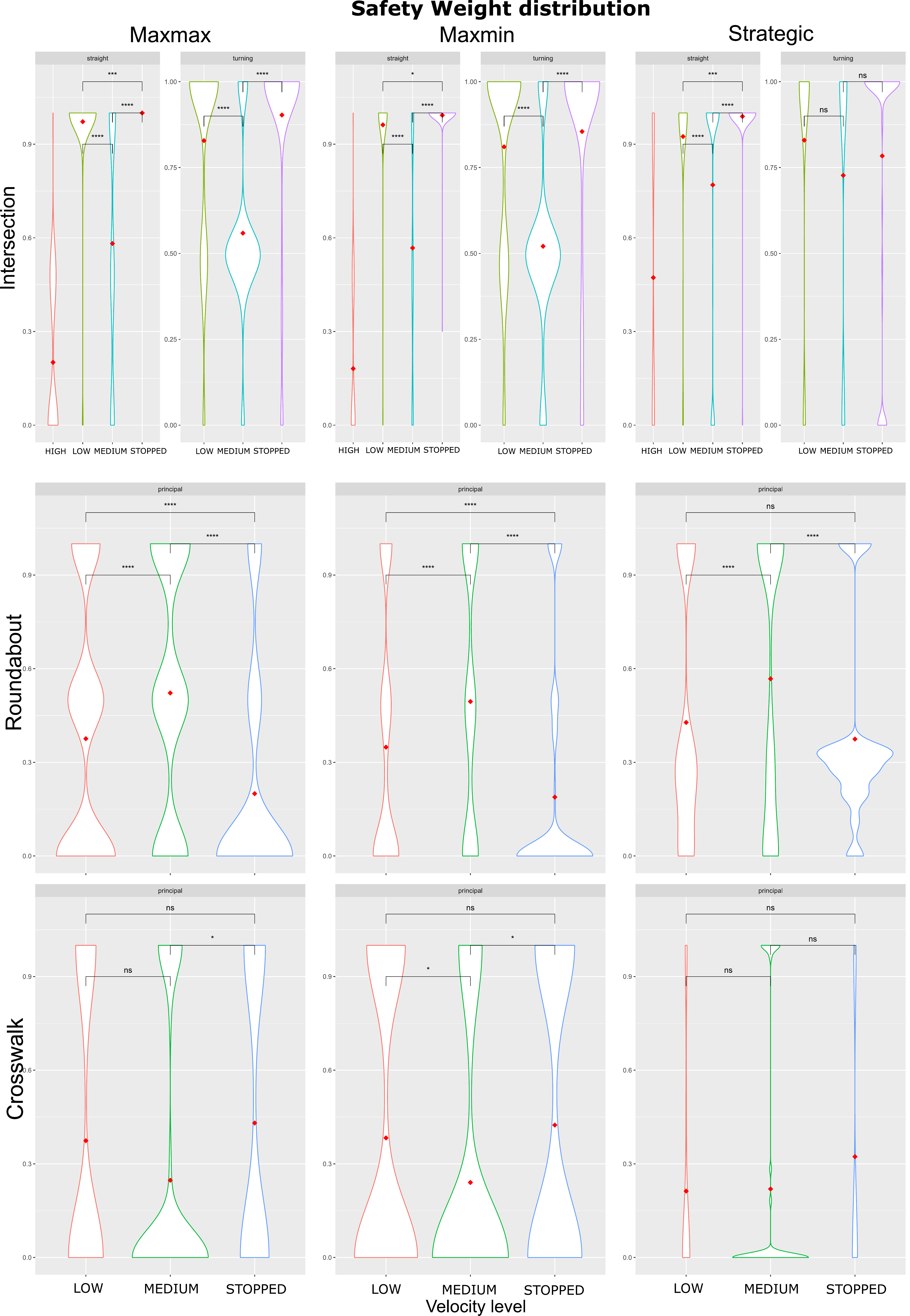}
  \caption{Safety weight parameter distribution stratified by vehicle speed, scenario, and task. Significance levels are noted as $p\leqslant0.05 (*), p \leqslant 0.01(**), p \leqslant 0.001(***), p \leqslant 0.0001(****)$, and \emph{ns}.}
 \label{fig:weighted_pref_disaggr_params}
\end{figure}
\begin{figure}[t]
\centering
\includegraphics[width=0.9\columnwidth]{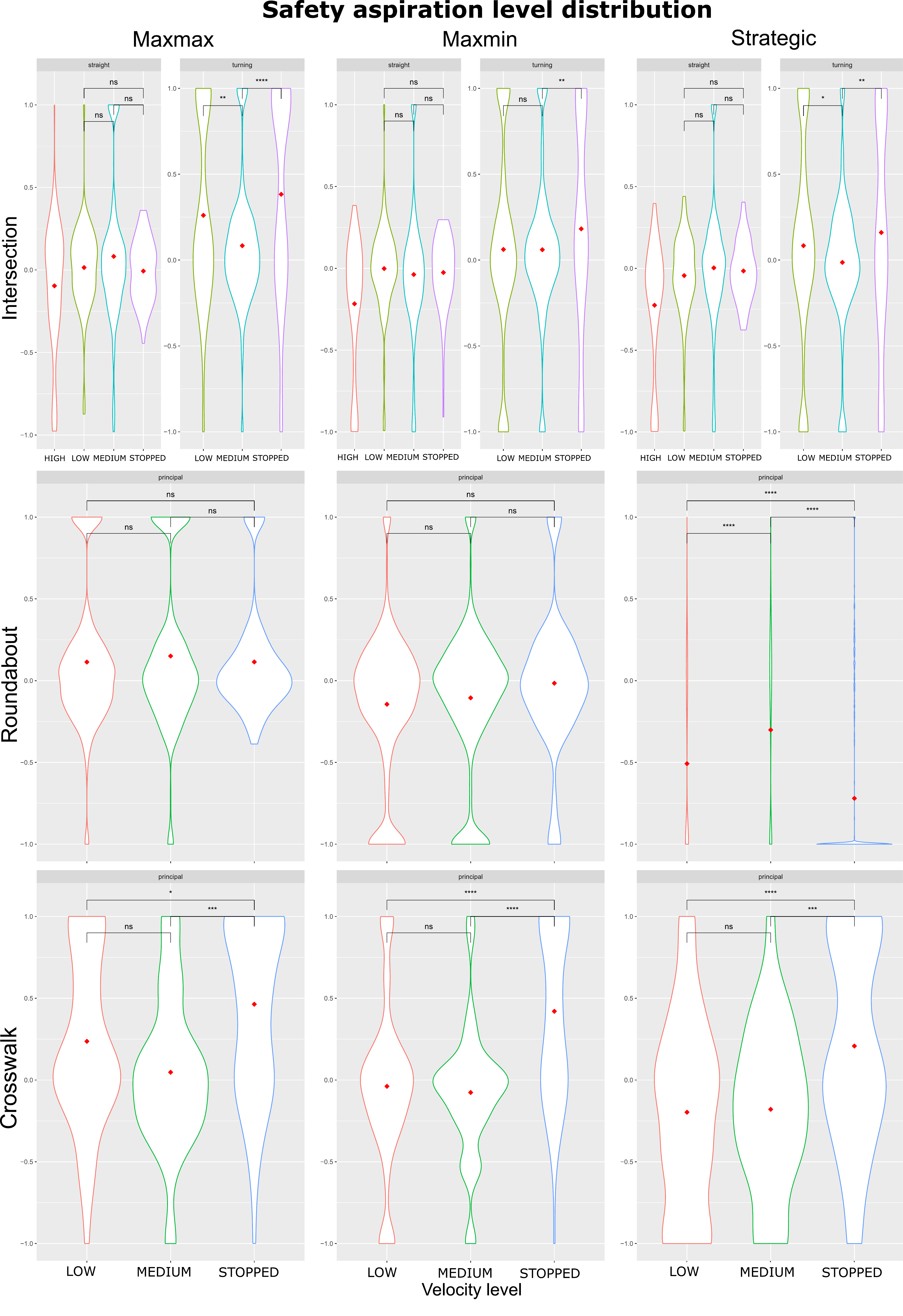}
  \caption{Safety aspiration level distribution stratified by vehicle speed, scenario, and task. Significance levels are noted as $p\leqslant0.05 (*), p \leqslant 0.01(**), p \leqslant 0.001(***), p \leqslant 0.0001(****)$, and \emph{ns}.}
 \label{fig:satisficing_pref_disaggr_params}
\end{figure}
\subsubsection{Parameter values} Next, we study the values of the rationalisable parameters that were estimated under each model. Figures \ref{fig:weighted_pref_disaggr_params} and \ref{fig:satisficing_pref_disaggr_params} show the violin plots of the safety weight (safety weight of $\mathbf{w}$) and safety aspiration level ($\gamma$) for weighted and satisficing aggregation methods, respectively, for the focal player in roundabout and crosswalk games, and both players in intersection games. For cases where rationalisable parameters are an interval, for the purpose of the figure we select the center of the interval as the representative sample. To study the association between the state factors, the figures are stratified based on the velocity of the agent as well as the scenario. The first observation about the weighted aggregation method is that irrespective of the reasoning model, the distributions of the weight parameters are multimodal with the modes being concentrated towards lower and higher values in most cases. This means that in most cases, with safety and progress as the two objectives, drivers tend to weigh heavily one or the other rather than weighing both together in some mixed proportion at the same time. Intuitively, this makes sense because what the revealed preference estimation \emph{rationalises} are the weights of the two objectives that would make the chosen action of the driver optimal. For example, in a given game, if the driver chose to proceed, then what we find is that evaluating that action only with respect to its, say, progress utility, makes it more optimal than if the driver had evaluated that action based on both safety and progress utility. On the other hand, for the satisficing method of aggregation, the distribution of $\gamma$ is not multimodal in most cases. Rather, the mean values (shown in red) are concentrated near 0, thereby indicating more stability across reasoning model or the game situation.\par

\subsubsection{Subgroup analysis} Next, we study the association between the vehicle speed and the estimated parameters values. The median values of the parameters are shown in red within the violin plots. Within each dataset and scenario, we perform subgroup analysis based on discretised velocity levels\footnote{Some velocity levels were excluded due to not enough instances in the data}, and significance between groups is noted above the horizontal lines according to Wilcoxon \emph{t-test} at significance levels $p\leqslant0.05 (*), p \leqslant 0.01(**), p \leqslant 0.001(***), p \leqslant 0.0001(****)$ (null hypothesis of equal means between groups). In general, significant differences in parameter values with respect to velocity levels were found for 72\% and 42\% of pairwise group comparisons for weighted and satisficing aggregation, respectively, where a group is a combination of scenario and model. This points to the fact that the safety aspiration levels of drivers show more stability at different velocity levels compared to the weight parameters. Additionally, within each scenario and reasoning model, for the cases where there is a significant difference, higher velocities are associated with lower safety weights and lower safety aspiration level for weighted and satisficing aggregation, respectively.

\FloatBarrier
\subsection{Predictive Accuracy Improvements}
Since subgroup analysis shows an association is observed between factors such as speed and task with the values of the aggregation parameter, in this section, we evaluate whether the aggregation methods in conjunction with a statistical learning method, such as Classification and Regression Trees (CART) \citep{breiman2017classification}, can improve the predictive accuracy of various behaviour models and solution concepts with respect to naturalistic human behaviour. We first split the dataset of games into training and testing set in 80:20 proportion with $K=30$ random subsampling, and use the training set to construct two separate regression trees, one each for weighted and satisficing aggregation, respectively. The independent variables in the models are driving task (left-turn, right-turn, straight-through, etc.), scenario (intersection, crosswalk, roundabout), reasoning model, and velocity, whereas the aggregation parameter estimated using the methods developed earlier is the dependent variable. We use the CART model to predict the aggregation parameters in the games in the testing set, and solve the games using different solution concepts. We then evaluate the improvement in accuracy of prediction of each solution concept when the games are constructed with multi-objective utilities aggregated using a fixed weights of $\mathbf{w}=[0.5,0.5]$ compared to our method. We select solution concepts from existing literature that have been proposed as a model for human driving or autonomous vehicle behaviour. 
The selected solution concepts are as follows: Level-k model \citep{tian2018adaptive, sarkar2021solution}, where $k=\{0,1,2\}$ indexed as L0, L1, and L2, respectively. L0 models are further separated based on the non-strategic solution concept into \texttt{maxmax} (L0:MX) and \texttt{maxmin} models (L0:MM) \citep{wright2014level}, pure strategy Nash equilibrium (PNE) with welfare maximizing solution selected in cases of multiple equilibria  \citep{sarkar2021solution,schwarting2018planning,pruekprasert2019decision}, Stackelberg solution for 2-player scenarios (Stack.) \citep{sun2020game, fisac2019hierarchical}, Rule based solution that says always wait for other vehicles or pedestrians, and a level-k, $k=1$,  model in which the level-0 behaviour is the rule based behaviour (LkR).\par
In the Stackelberg model, the focal agent, on account of not holding the right of way, is modelled as the follower. To match the observed (ground truth) manoeuvre with one of the two manoeuvres in our games (\emph{wait} or \emph{proceed}), we first select the trajectory generated in the game that is closest to the observed trajectory based on the trajectory length. The manoeuvre corresponding to that closest trajectory is selected to be the ground-truth manoeuvre.\par
\begin{figure}[t]
\centering
\includegraphics[width=\columnwidth]{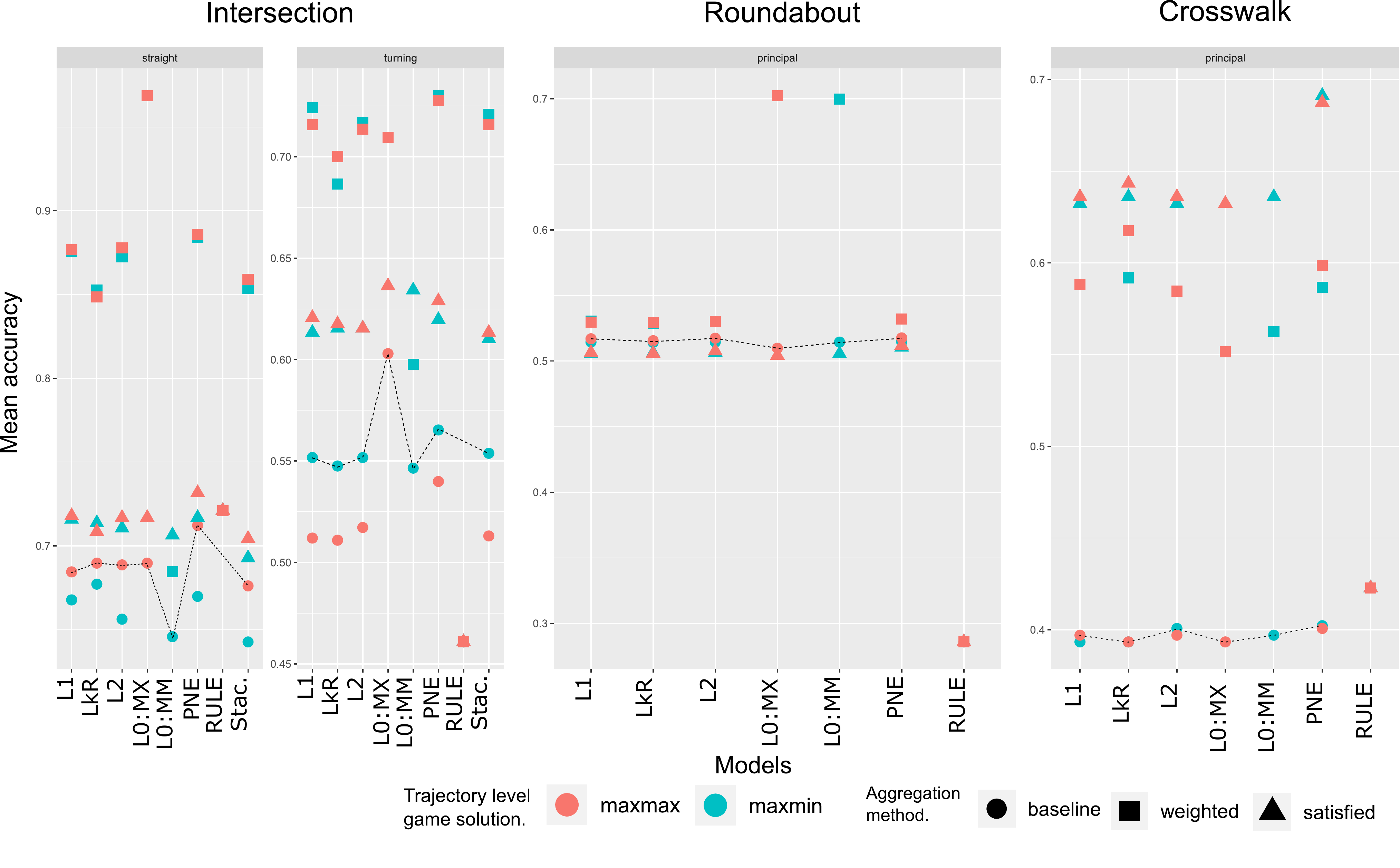}
  \caption{Mean accuracy of all the models. The dashed line highlights the accuracy of the models constructed with baseline weights.}
 \label{fig:mean_accuracy_all}
\end{figure}

Fig. \ref{fig:mean_accuracy_all} shows the mean accuracy, i.e., proportion of match between predicted and observed manoeuvre, across 30 runs. Compared to a baseline (weighted aggregation $\mathbf{w}=[0.5,0.5]$ indicated as dashed line in figure Fig. \ref{fig:mean_accuracy_all}), a prediction-based aggregation consistently shows better accuracy regardless of the choice of the solution concept. The performance of the models with respect to weighted and satisficing aggregation show some dependency on the specific scenario. For intersection and roundabout scenarios, weighted aggregation shows higher accuracy; whereas for crosswalk, satisficing based aggregation shows higher accuracy. The crosswalk scenario, which is vehicle-pedestrian interaction type, is quite different compared to the intersection or roundabout where there are only vehicle-vehicle interaction games. Drivers are also much more cautious when navigating a crosswalk, since there are pedestrians involved. Combining the insights from the accuracy result along with the pass rate of Table \ref{chrv:tab:pass_rate}, the data suggest that satisficing is much more effective as an aggregation method in scenarios where drivers exhibit higher levels of caution, such as crosswalk navigation. The final observation is the worse performance of pure rule following indicating that a model of pure rule following might not be best suited as a model of behaviour for the selected situations of high strategic interactions.\par

\FloatBarrier
\section{Conclusion}
In this paper, we address the problem of estimation of multi-objective aggregation parameters of agents based on observed behaviour. The methods developed in the paper are based on the idea of \emph{rationalisability}, i.e., a value of the aggregation parameter that makes the observed decision of a player optimal conditioned upon a reasoning model. The paper covers two processes of aggregation, namely weighted and satisficing aggregation, and the reasoning models cover strategic as well as non-strategic models. We show that the process of estimating aggregation parameters for weighted aggregation can be formulated as a linear and non-linear program for strategic and non-strategic models, respectively. Furthermore, we develop a novel algorithm for estimation of aggregation parameters for satisficing aggregation that is linear time for strategic models and polynomial time in the size of action space for non-strategic models. Based on a naturalistic dataset of three different traffic scenarios, rationalisable parameters for weighted aggregation were found for all games in the dataset, and for the majority of the games for satisficing aggregation. The paper also includes an extensive evaluation of several solution concepts and behaviour models and shows that, compared to a weighted aggregation with fixed set of weights, the proposed method of utility aggregation improves predictive accuracy across all the chosen solution concepts. In future, the hypotheses generated by the methods in the paper using observational data can be followed up with a controlled experiment to evaluate the aggregation process of human drivers in different situations.   
\balance

\bibliographystyle{ACM-Reference-Format} 
\bibliography{sample}

%%%%%%%%%%%%%%%%%%%%%%%%%%%%%%%%%%%%%%%%%%%%%%%%%%%%%%%%%%%%%%%%%%%%%%%%

\end{document}